\DeclareMathOperator*{\argmin}{arg\,min}
\newtheorem{theorem}{Theorem}[section]
\newtheorem{lemma}[theorem]{Lemma}
\newcommand \eqn[1]{Eq.~(#1)}
\newcommand{\method}{KerBS\xspace}
\definecolor{car}{rgb}{0.08,0.38,0.64}
\definecolor{vehicle}{rgb}{1,0.55,0.1}
\title{Kernelized Bayesian Softmax for Text Generation}
\author{
	Ning Miao ~~ Hao Zhou ~~ Chengqi Zhao ~~ Wenxian Shi ~~ Lei Li\\ 
	ByteDance AI lab \\ 
	\texttt{\{miaoning,zhouhao.nlp,zhaochengqi.d,shiwenxian,lileilab\}@bytedance.com} 
}
\begin{document}

\maketitle

\renewcommand{\thefootnote}{\arabic{footnote}}
\setcounter{footnote}{0}

\begin{abstract}
Neural models for text generation require a softmax layer with proper word embeddings during the decoding phase.
Most existing approaches adopt single point embedding for each word. 
However, a word may have multiple senses according to different context, some of which might be distinct. 
In this paper, we propose \method, a novel approach for learning better embeddings for text generation. 
\method embodies two advantages: 
\begin{inparaenum}[\it a)]
	\item it employs a Bayesian composition of embeddings for words with multiple senses;
	\item it is adaptive to semantic variances of words and robust to rare sentence context by imposing learned kernels to capture the closeness of words (senses) in the embedding space.
\end{inparaenum}
Empirical studies show that \method significantly boosts the performance of several text generation tasks.\footnote{Codes are available at \url{https://github.com/NingMiao/KerBS}}

  
\end{abstract}

\section{Introduction}
\label{sec:intro}
Text generation has been significantly improved with deep learning approaches in tasks such as language modeling~\citep{bengio2003neural,mikolov2010recurrent}, machine translation~\citep{sutskever2014sequence,Bahdanau:nmt:iclr, vaswani2017attention}, and dialog  generation~\citep{sordoni2015neural}. 
All these models include a softmax final layer to yield words. 
The softmax layer takes a context state~($h$) from an upstream network such as RNN cells as the input, 
and  transforms $h$ into the word probability with a linear projection ($W\cdot h$) and an exponential activation.
Each row of $W$ can be viewed as the embedding of a word.
Essentially, softmax conducts embedding matching with inner-product scoring between a calculated context vector $h$ and word embeddings $W$ in the vocabulary.

The above commonly adopted setting for softmax imposes a strong hypothesis on the embedding space ---
it assumes that each word corresponds to a single vector and 
the context vector $h$ from the decoding network must be indiscriminately close to the desired word embedding vector
in certain distance metric. 
We discover that such an assumption does not coincide with practical cases. 
Fig.~\ref{fig:BERT} visualizes examples of the context vectors for utterances containing the examined words, calculated from the BERT model~\cite{devlin2018bert}. 
We make three interesting observations. 
\begin{inparaenum}[\it a)]
  \item Multi-sense: Not every word's context vectors form a single cluster. There are words with multiple clusters (Fig.~\ref{fig:intro_tree_b}). 
  \item Varying-variance: The variances of context vectors vary significantly across clusters. Some words correspond to smaller variances while others to larger variances (Fig.~\ref{fig:intro_tree_c}).
  \item Robustness: There are outliers in the context space (Fig.~\ref{fig:intro_tree_b}).
\end{inparaenum}
These observations explain the ineffectiveness during training with the traditional softmax.
The traditional way brings word embedding $W$ ill-centered with all context vectors of the same word -- even though they might belong to multiple clusters. 
At the same time, the variances of different words are completely ignored in the plain softmax with inner-product as the similarity score. 
It is also vulnerable to outliers since a single anomally would lead the word embedding to be far from the main cluster. 
In short, the softmax layer doesn't have sufficient expressiveness capacity.

\citet{yang2018breaking} propose Mixture-of-Softmax (MoS) to enhance the expressiveness of softmax. 
It replaces a single softmax layer with a weighted average of $M$ softmax layers. However, all words share the same fixed number of components $M$ and averaging weights, which heavily restrict MoS's capacity. Furthermore, the variances of context vectors are not taken into the consideration.

\begin{figure}[ht]
    \centering
     \subfloat[``computer'']{
 \label{fig:intro_tree_a}
 \includegraphics[width=0.2\textwidth]{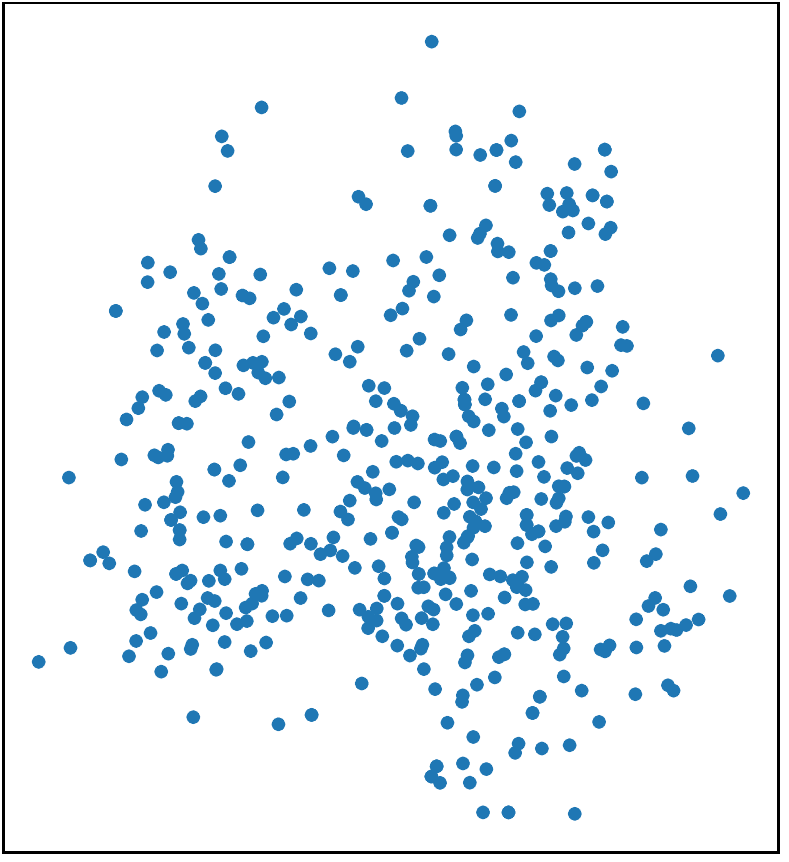}
 }\hspace{1cm}
 \subfloat[``monitor'']{
 \label{fig:intro_tree_b}
 \includegraphics[width=0.2\textwidth]{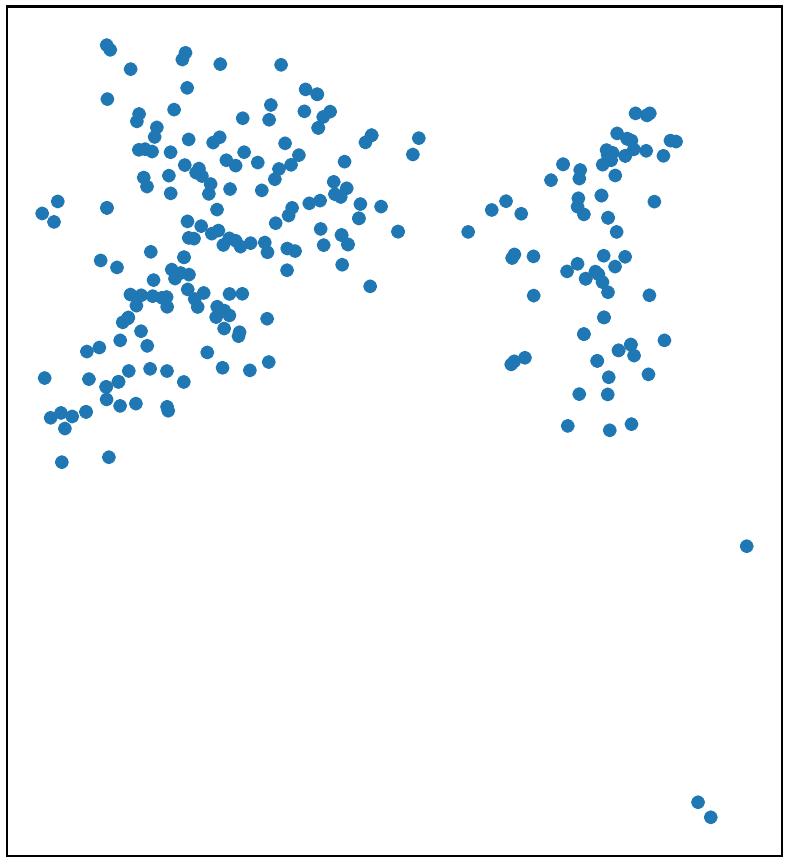}
}\hspace{1cm}
 \subfloat[\textcolor{car}{car} and \textcolor{vehicle}{vehicle}]{
 \label{fig:intro_tree_c}
 \includegraphics[width=0.2\textwidth]{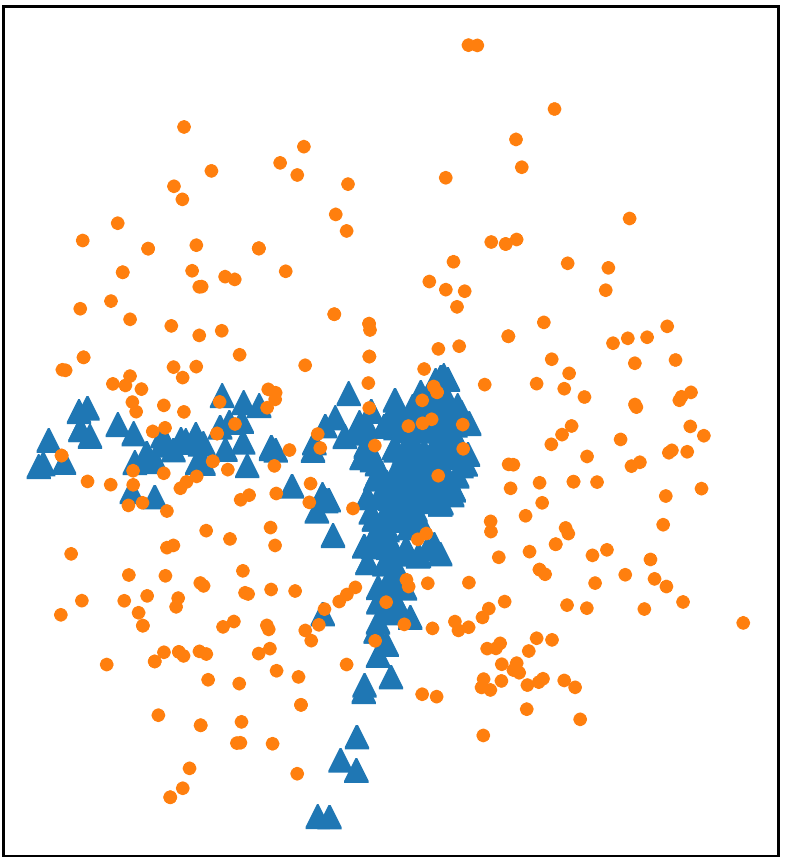}
}
    \caption{Context vectors $h$ calculated from BERT and projected using PCA. Each point corresponds to one utterance containing the word. 
    (a) ``computer'' has only one cluster; (b) ``monitor'' has two clusters, representing the verb (left) and the noun (right). The outlier at the lower right only appears in phrase ``christian science monitor''; (c) ``car'' has smaller variance than ``vehicle''.}
    \label{fig:BERT}
\end{figure}

In this paper, we propose \method, a novel approach to learn text embedding for generation. 
\method avoids the above softmax issues by introducing a Bayesian composition of multiple embeddings and a learnable kernel to measure the similarities among embeddings.
Instead of a single embedding, \method explicitly represents a word with a weighted combination of multiple embeddings -- each is regarded as a ``sense''\footnote{Since there is no direct supervision, an embedding vector does not necessarily correspond to a semantic sense.}. 
The number of embeddings is automatically learned from the corpus as well. 
We design a family of kernel functions to replace the embedding matching (i.e. the matrix-vector dot-product) in softmax layer. 
With parameters learned from text, each word (or ``sense'') can enjoy individual variance in its embedding space. 
In addition, the kernel family is more robust to outliers than Gaussian kernels.

We conduct experiments on a variety of text generation tasks including machine translation, language modeling, and dialog generation.
The empirical results verify the effectiveness of \method. 
Ablation study indicates that each part of \method, including the Bayesian composition and the kernel function, is necessary for the performance improvement. 
We also find that words with more semantic meanings are allocated with more sense embeddings, which adheres to our intuition.

\section{Related work}
\label{sec:related}
\noindent \textbf{Word Embeddings.} 
Word2Vec~\citep{mikolov2013distributed} and GloVe~\citep{pennington2014glove} learn distributed word representations from corpus in an unsupervised way. However, only one embedding is assigned to each word, which not only suffers from ignoring polysemy but also could not provide context related word embeddings.
Recent works~\citep{AlecOpenAI, peters2018deep, devlin2018bert} indicates that pre-trained contextualized word representations are beneficial for downstream natural language processing tasks. BERT~\citep{devlin2018bert} pre-train a masked language model with a deep bidirectional Transformer and it achieves state-of-the-art performance in various NLP tasks. 

\noindent \textbf{Multi-Sense Word Embeddings.}
Early works obtain multi-sense word embeddings by first training single point word embeddings and then clustering the context embeddings~(for example, the average embedding of neighbor words). But these methods are not scalable and take lots of efforts in parameter tuning~\citep{reisinger2010multi, huang2012improving}. 
\citet{tian2014probabilistic} introduce a probabilistic model, which uses a variable to control sense selection of each word. \citet{liu2015topical} add a topic variable for each word, and condition word embeddings on the topic variable. Both of \citet{tian2014probabilistic} and \citet{liu2015topical} can be easily integrated into Skip-Gram model~\citep{mikolov2013distributed}, which is highly efficient.
Other works~\citep{chen2014unified, jauhar2015ontologically, chen2015improving, wu2015sense} further improve the performance of multi-sense embeddings by making use of huge corpora such as WordNet~\cite{miller1995wordnet} and Wikipedia. 
However, these works are mainly focused on text understanding rather than text generation. 

\noindent \textbf{Word Embedding as a Distribution.}
In order to represent the semantic breadth of each word, \citet{vilnis2015} propose to map each word into a Gaussian distribution in the embedding space. Instead of using cosine similarity in \citet{mikolov2013distributed}, \citet{vilnis2015} use KL-divergence of the embedding distributions to measure the similarities between words. 
To improve the numerical stability of Gaussian word embeddings, especially when comparing very close or very distant distributions, \citet{sunEtAl2018} propose to replace KL-divergence with Wasserstein distance. 
Though Gaussian word embeddings perform well in word-level tasks such as similarity and entailment detection, they cannot be directly applied to the scenario of text generation, because it is difficult to perform embedding matching between Gaussian word embeddings and output embeddings, which are usually single points in the embedding space.

\section{Background}
\label{sec:background}

Most text generation models generate words through an embedding matching procedure.  
Intuitively, at each step, upstream networks such as RNN decoders compute a \textit{context vector} $h$
according to the encoded information from input and previously generated words.
The context vector $h$ serves as a query to search for the most similar match from a pre-calculated vocabulary embeddings $W$. 
In practice, this is implemented with an inner-product between $W$ and $h$.
Normalized probabilities over all words are computed with the softmax function. 
Words with the highest probabilities will be chosen during the inference process. 

Specifically, given an utterance $\hat{y}$, a GRU decoder calculates as follows:
\begin{align}
  &e_{t}  =\textsc{Lookup}(W_{in}, \hat{y}_{t-1}),\label{eqn:emb_lookup}\\
  &h_{t}  =\textsc{GRU}(h_{t-1}, e_{t}),\label{eqn:GRU}\\
  &P(y_t=i)  =\textsc{Softmax}(h_{t}W)_i.\label{eqn:softmax_tra}
\end{align}
At time step $t$, its word embedding $e_t$ is obtained by looking up the previous output word in the \textit{word  embedding} matrix $W_{in}=[\bar{w}_1, \bar{w}_2,...,\bar{w}_V]$~(\eqn{\ref{eqn:emb_lookup}}).
Here $\bar{w}_i$ is the embedding of the $i$-th word in the vocabulary. $V$ is the vocabulary size. 
The context embedding $h_t$ of the $t$-th step will be obtained from GRU by combining information of $h_{t-1}$ and $e_t$~( \eqn{\ref{eqn:GRU}}). Other decoders such as Transformer~\cite{vaswani2017attention} work similary.

\eqn{\ref{eqn:softmax_tra}} performs embedding matching between $h_t$ and $W$, and probabilities of words will be obtained by a softmax activation.
Intuitively, to generate the correct word $\hat{y}_t$, the context embedding $h_t$ should lie in a small neighborhood around ${\hat{y}_t}$'s word embedding ${w_{\hat{y}_t}}$.


\section{Proposed \method}
\label{sec:method}
In this section, we first introduce \method for text generation. 
It is designed according to the three observations mentioned in the introduction: multi-sense, varying-variance, and robustness.
Then we provide a training scheme to dynamically allocate senses since it is difficult to directly learn the number of senses of each word.


\subsection{Model Structure}
\label{sec:model}

\method assumes that the space of context vectors for the same word consists of several geometrically separate components.
Each component represents a ``sense'', with its own variance. 
To better model their distribution, we replace \eqn{\ref{eqn:softmax_tra}} with the following equations:
\begin{align}
    P(y_{t}=i)=\sum_{j \in {0,1,...,M_i}}P( s_t = \langle i,j \rangle ).
    \label{eqn:MoBE1}
\end{align}
Here, $s_t$ is the sense index of the step $t$. 
Its value takes $\langle i,j \rangle$ corresponding to the $j$-th sense of the $i$-th word in vocabulary.
$M_i$ is the number of senses for word $i$, which may be different for different words. 
Instead of directly calculating the probabilities of words, \method first calculates the probabilities of all senses belonging to a word and then sums them up to get the word probability.

\begin{equation}
\begin{split}
    P(&s_t=\langle i,j \rangle) = \textrm{Softmax}\big( [\mathcal{K}_{\theta}(h_t, W) ]\big)_i^j
    =\frac{ \exp(\mathcal{K}_{\theta_i^j}(h_t,{w}_i^j))}{\sum_k{\sum_{r \in {0,1,...,M_k}} \exp(\mathcal{K}_{\theta_k^r}(h_t,{w}_k^r))}}.\label{eqn:MoBE2}
\end{split}
\end{equation}
The probability of output sense $s_t$ in \eqn{\ref{eqn:MoBE2}} is not a strict Gaussian posterior, as the training of Gaussian models in high dimensional space is numerical instable. 
Instead, we propose to use a carefully designed kernel function, to model the distribution variance of each sense.
Concretely, we replace the inner product in \eqn{\ref{eqn:softmax_tra}} with kernel function $\mathcal{K}$, which depends on a variance-related parameter $\theta$. 
$[\mathcal{K}_{\theta}(h_t, W) ]$ is a simplified notation containing all pairs of kernal values $\mathcal{K}_{\theta_i^j}(h_t,{w}_i^j)$.
With different $\theta_i^j$ for each sense, we can model the variances of their distributions separately.

\subsubsection{Bayesian Composition of Embeddings}
\label{sec:Multimodality}
In this part, we introduce in detail how \method models the multi-sense property of words.
Intuitively, we use Bayesian composition of embeddings in text generation, because the same word can have totally different meanings.
For words with more than one sense, their corresponding context vectors can be usually divided into separate clusters~(see Figure~\ref{fig:BERT}). 
If we use single-embedding models such as traditional softmax to fit these clusters, the word embedding will converge to the mean of these clusters and could be distant from all of them.
This may lead to poor performance in text generation.

As shown in \eqn{\ref{eqn:MoBE1}}, we can allocate different embeddings for each sense. We first obtain the sense probabilities by performing weight matching between context vector $h$ and sense embedding matrix ${W}$. 
Then we add up the sense probabilities belonging to each word to get word probabilities.

We adopt weight tying scheme~\citep{inan2016tying}, where the decoding embedding and the input embedding are shared. 
Since $W$ is a matrix of sense embeddings, it cannot be directly used in the decoding network for next step as in \eqn{\ref{eqn:emb_lookup}}. 
Instead, we obtain embedding $e_t$ by calculating the weighted sum of sense embeddings according to their conditional probabilities. Assume that $\hat{y}_t=i$ is the input word at step $t$,
\begin{align}
    &e_t=\sum_{j \in [1,2,...,M_i]}{P(s_{t-1}=\langle i, j\rangle|\hat{y}_{[0:t-1]})\ w_i^j},\label{eqn:Mobe_emb_loopup2} \\
    &P(s_{t-1}=\langle i,j\rangle|\hat{y}_{[0:t-1]})
    =\frac{P(s_{t-1}=\langle i,j\rangle|\hat{y}_{[0:t-2]})}{\sum_{k \in [1,2,...,M_i]} 
    P(s_{t-1}=\langle i,k\rangle|\hat{y}_{[0:t-2]})}.\label{eqn:Mobe_emb_loopup1}
\end{align}

\subsubsection{Embedding Matching with Kernels}
\label{sec:Perturbation}
To calculate the probability of each sense, it is very straightforward to introduce Gaussian distributions in the embedding space.
However, it is difficult to learn a Gaussian distribution for embeddings in high dimensional space for the following reasons. 
Context vectors are usually distributed in low dimensional manifolds embedded in a high dimensional space. 
Using an iostropic Gaussian distribution to model embedding vectors in low dimensional manifolds may lead to serious instability.
Assume in a $d$-dimensional space, the distribution of $H_i$ follows $N(0,\sigma_1)$ in a $d_1$-dimensional subspace. 
We build a model $N(0,\sigma)$ to fit the embedding points. 
But there are often some noisy outliers, which are assumed to distribute uniformly in a cube with edge length 1 and centered at the origin. 
Then the average square distance between an outlier and the origin is $\frac{d}{12}$, which increases linearly with $d$.
The $\log$-likelihood to maximize can be written as:
\begin{equation}
    \begin{split}
        \mathcal{L}=&\sum_{x \in X} \log((\sqrt{2\pi}\sigma)^{-d}\exp(-\frac{\sum_{i\in {1,2,...,d}} x_i^2}{2\sigma^2}))
        = \sum_{x \in X} (-d\log(\sqrt{2\pi}\sigma)-\frac{\sum_{i\in {1,2,...,d}} x_i^2}{2\sigma^2}),
    \end{split}
\end{equation}
where $X$ is the set of data points including outliers. Denote the proportion of outliers in $X$ as $\alpha$. Since $\mathbb{E}(\sum_{i\in {1,2,...,d}} x_i^2)$ equals $d_1$ for points generated by the oracle and $\frac{d}{12}$ for outliers, $\mathcal{L}$ is dominated by outliers when $d$ is large. The optimal $\sigma$ approximately equals to $\sqrt{\frac{\frac{\alpha d}{12}+(1-\alpha)d\sigma_1}{d}}$. With large $d$, optimal $\sigma \approx \sqrt{\frac{\alpha}{12}}$, which is independent of real variance $\sigma_1$.
As expected, we find that directly modeling the Gaussian distributions does not work well in our preliminary experiments.

Therefore we design a kernel function to model embedding variances, which can be more easily learned compared with Gaussian mixture model. 
Specifically, we replace the inner product $\mathcal{I}(h,e)=cos(h, e)|h||e|$, which can be regarded as a fixed kernel around whole space, with a kernel function
\begin{equation}
    \begin{split}
        \mathcal{K}_{\theta}(h, e)=|h|\,|e|\,(a\exp(-\theta\  cos(h,e))-a).\label{eqn:kernel}
    \end{split}
\end{equation}
Here $\theta$ is a parameter controlling the embedding variances of each sense and $a=\frac{-\theta}{2(\exp(-\theta)+\theta-1))}$ is a normalization factor. When $\theta\to 0$,  $\mathcal{K}_{\theta}(h, e)\to \mathcal{I}(h,e)$, which degenerates to common inner product.
As shown in Figure \ref{fig:kernel}, with a small $\theta$, embeddings are concentrated on a small region, while a large $\theta$ leads to a flat kernel.
Finally, parameters for the $i$-th word could be:$\{[{w}_i^1, \theta_i^1], [{w}_i^2, \theta_i^2] \cdots [{w}_i^{M_i}, \theta_i^{M_i}]\}$, where ${w}_i^j$ and $\theta_i^j$ are the embedding and kernel parameter of sense $\langle i,j\rangle$.
Intuitively, in the original space with inner product similarity, the density of probability mass is uniformly distributed.
But $\mathcal{K}$ distorts the probabilistic space, making the variances of context vectors differ over different senses.   

Since 
\begin{equation}
    \begin{split}
        \frac{\partial\log\mathcal{K}_{\theta}(h, e)}{\partial \theta}=\frac{1}{a}\frac{\partial a}{\partial \theta}-\frac{cos(h,e) exp(-\theta cos(h,e))}{\exp(-\theta cos(h,e))-1},
        \label{eqn:outlier}
    \end{split}
\end{equation}
the gradient of each $h$ is bounded for fixed $\theta$. It results from the continuity of $\frac{cos(h,e)exp(-\theta cos(h,e))}{\exp(\theta cos(h,e))-1}$ when $cos(h,e)\neq 0$ and the fact that $\frac{cos(h,e)exp(-\theta cos(h,e))}{\exp(\theta cos(h,e))-1}\to \frac{1}{\theta}$, when $cos(h,e)\to 0$.
As a result, a small proportion of outliers or noise points will not have a major impact on training stability. 
\begin{figure}[t]
    \centering
     \subfloat[$\theta$ = -2]{
 \label{fig:tree_a}
 \centering
 \includegraphics[width=0.3\textwidth]{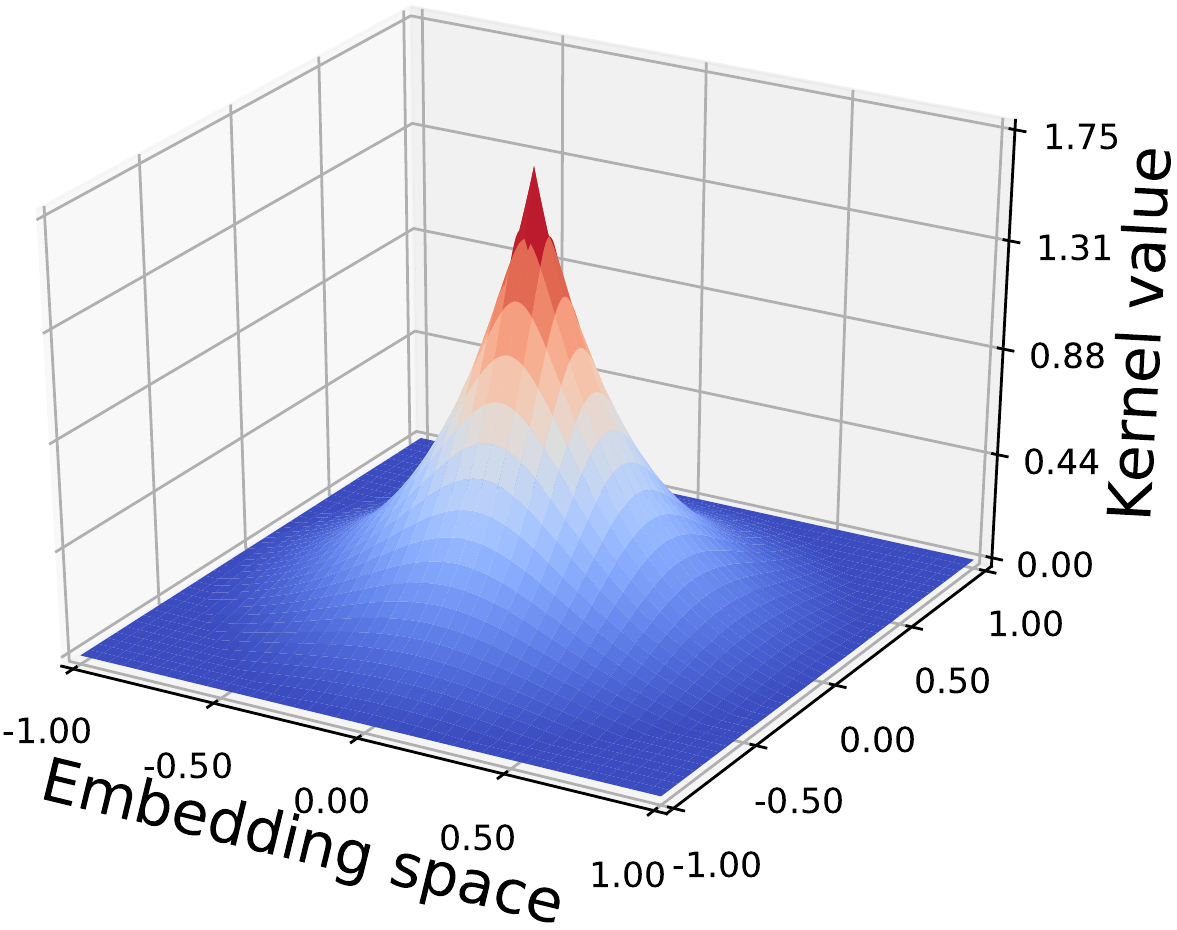}
 }
 \subfloat[$\theta$ = 2]{
 \label{fig:tree_b}
 \centering
 \includegraphics[width=0.3\textwidth]{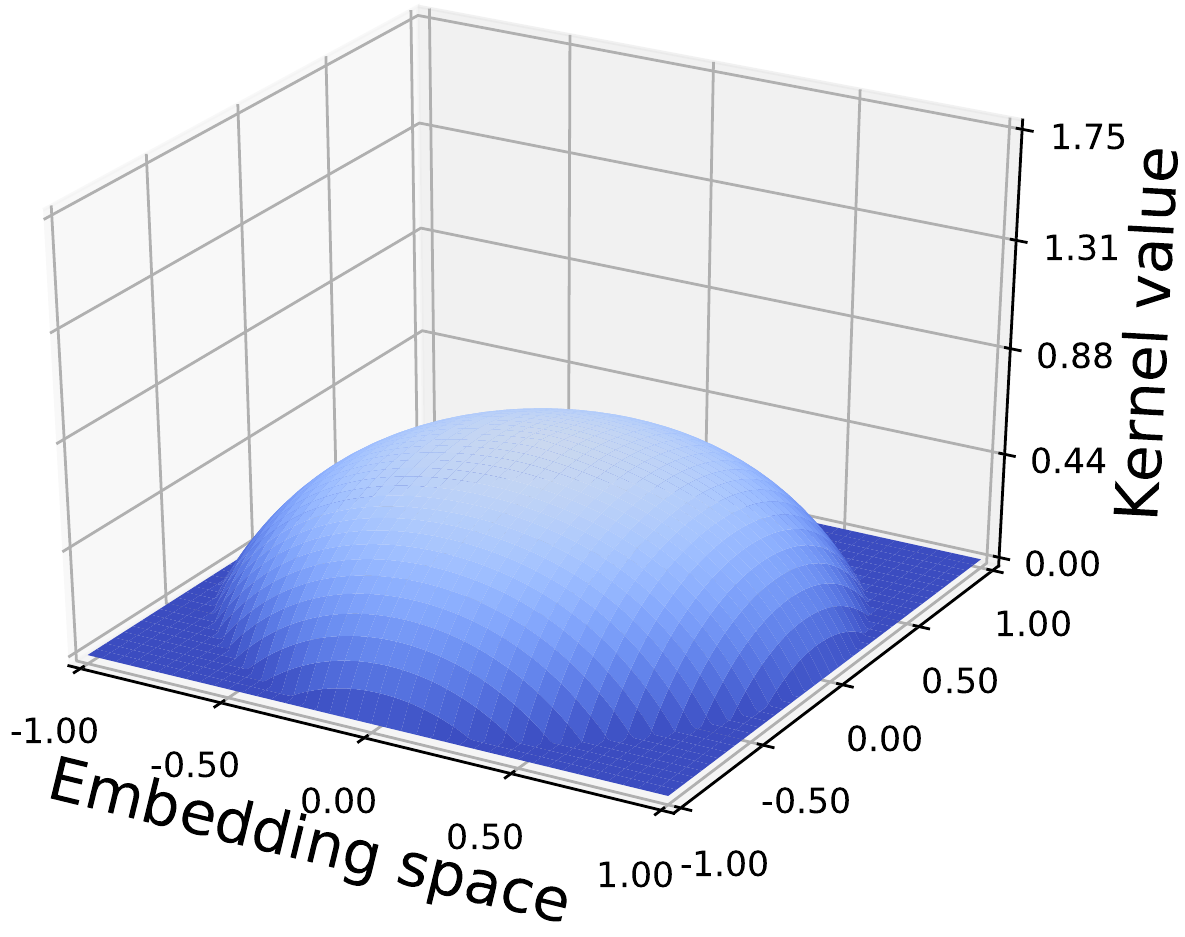}
}
    \caption{Kernel shapes with different $\theta$.}
    \label{fig:kernel}
\end{figure}

\subsection{Training Scheme}
\label{sec:distillation}
It is difficult to empirically determine the sense numbers of each word, which is a very large set of hyper-parameters. Also, properties of the same word may vary among different corpora and tasks.
So we design a training scheme for \method, which includes dynamic sense allocation. Instead of providing the sense number for each word, we only need to input the total sense number. The algorithm will automatically allocate senses.

Details of the training scheme are shown in Algorithm 1.
Specifically, to obtain parameters for both \method and upstream network $f_\phi$, which outputs the context vectors, the whole process consists of allocation and adaptation phases. 
Before training, $W$ and $\theta$ are initialized by a random matrix and a random vector respectively. We randomly allocate $M_{sum}$ senses to words. 
After initialization, we first turn to the adaptation phase. Given a sequence $\hat{y}$ in training set, at step $t$, we get the context vector $h_t$ from $f_\phi$. Then sense and word probabilities are calculated by \eqn{\ref{eqn:MoBE1}} and \eqn{\ref{eqn:MoBE2}}, respectively. Afterwards, we calculate the log-probability $\mathcal{L}$ of generating $\hat{y}_t$. And we maximize $\mathcal{L}$ by tuning $W$, $\theta$ and $\phi$:
\begin{equation}
\mathcal{L}=\sum_{t} \log(P(y_t=\hat{y}_t|\hat{y}_{[0:t-1]}; W,\theta,\phi)).
\end{equation}
During the adaption phase, \method learns ${w}_i^j$, the sense embedding vector, and $\theta_i^j$, the indicator of distribution variance.


During the allocation phase, we remove redundant senses and reallocate them to poorly predicted words. 
To determine senses to remove and words which need more senses, we record the moving average of each word's log prediction accuracy $\log P$ and sense usage $U$:
\begin{align}
    \log P_i \leftarrow (1-\beta)\log P_i + \log(P(y_t=i))\mathbbm{1}_{{i=\hat{y}_t}}
    \label{eqn:logP} \\
    {U_i^j}\leftarrow(1-\beta){U_i^j}+\beta P({s}_t=\langle i,j\rangle)\mathbbm{1}_{{i=\hat{y}_t}}
    \label{eqn:U}
\end{align}
where $\beta$ is the updating rate.
For a word $i$, if after several epochs $logP$ is consistently lower than a threshold $\epsilon$, we think that the senses currently allocated to $i$ is not enough. Then we delete the least used sense and reallocate it to $i$. 
We alternatively perform adaption and reallocation until convergence.


\begin{algorithm}\footnotesize

    \SetKwInOut{Input}{Input}
    \SetKwInOut{Output}{Output}
    
    \Input{Training corpus $\hat{Y}$, total sense num $M_{sum}$, word num $V$, embedding dimension $d$, adaption-allocation ratio $Q$, threshold $\epsilon$;}
    \Output{$W$, $\theta$, sense allocation list $L$;}
    Initialize $W$, $H$, $\theta$, $U$, $L$, $step=0$;\\ 
    \While{not converge}
      {
      Random select $\hat{y}\in \hat{Y}$;
      
      \For{$i_t$ in $T$}
        {
          $h_t\gets{f_{\phi}}(\hat{y}_{[0:t-1]})$ ;
          
          \textrm{Calculate sense probability} $P({y}_t=\langle i,j\rangle)$ \textrm{and word probability} $P(y_t=i)$ by \eqn{\ref{eqn:MoBE1}}, \eqref{eqn:MoBE2};
          
          \textsc{Maximize} $\log(P(y_t=\hat{y}_t))$ by ADAM;
          
          
          Update $logP$ and $U$ by \eqn{\ref{eqn:logP}}, \eqref{eqn:U};
         
       } 
        \If {$step\hspace{0.5em} mod\hspace{0.5em} Q=0$}{
        \For{$i$ in $\{1,2,...,V\}$}{
         \If{$logP_i<\epsilon$}
      {
        $i'_0, j'_0\gets \argmin_{i',j'}(U_{i'}^{j'})$;
        
        
        $\theta_{i'_0}^{j'_0}\gets 1e-8$; $U_{i'_0}^{j'_0}\gets \textsc{Mean}(U)$;
        
        $L[\langle i'_0, j'_0 \rangle]\gets i$; 
      }}
    }
    $step=step+1$;
}
        \label{alg:distill}
    \caption{Training scheme for \method}
\end{algorithm}

\subsection{Theoretical Analysis}
In this part, we explain why \method has the ability to learn the complex distributions of context vectors. We only give a brief introduction to the following lemmas and leave more detailed proofs in the appendix. 

\begin{lemma} 
\method has the ability to learn the multi-sense property. If the real distribution of context vectors consists of several disconnected clusters, \method will learn to represent as many clusters as possible.
\end{lemma}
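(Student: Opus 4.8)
The plan is to make the informal statement precise and then argue by a loss-comparison (realizability) argument: I would show that for any target distribution over context vectors that decomposes into $C$ disconnected clusters, there is a choice of \method parameters (with enough senses, $M_{sum}\ge C$) whose induced conditional distribution can be driven arbitrarily close to the target, whereas any single-sense model incurs an irreducible loss gap. Concretely, fix a word $i$ whose context vectors $H_i$ concentrate near $C$ well-separated directions $u_1,\dots,u_C$ on the sphere (after the $|h|$ scaling is factored out, the kernel $\mathcal{K}_\theta(h,e)$ in \eqn{\ref{eqn:kernel}} depends on $h$ only through $\cos(h,e)$). First I would assign one sense $\langle i,c\rangle$ per cluster, set $w_i^c \propto u_c$, and choose $\theta_i^c$ small and negative so that, by Figure~\ref{fig:kernel} and the monotonicity of $t\mapsto a\exp(-\theta t)-a$ in $\cos$, the kernel value $\mathcal{K}_{\theta_i^c}(h,w_i^c)$ is sharply peaked when $h$ aligns with $u_c$ and decays on the other directions. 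The competing senses' embeddings (for other words) can be pushed to directions maximally misaligned with every $u_c$, so the softmax in \eqn{\ref{eqn:MoBE2}} concentrates almost all mass on the correct sense for each cluster; summing over $j$ in \eqn{\ref{eqn:MoBE1}} then recovers $P(y_t=i)\approx 1$ on all of $H_i$.

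The next step is the lower bound for the single-sense (traditional softmax, or \method with one sense) case, formalizing "the embedding converges to the mean and is far from all clusters." Here I would use the separation of the $u_c$: a single unit vector $\bar w_i$ can have large $\cos(\bar w_i,u_c)$ for at most one $c$ (by a pigeonhole / triangle-inequality argument on the sphere, since the $u_c$ are pairwise far apart), so on the clusters aligned with the other $C-1$ directions the inner-product score $\mathcal{I}(h,\bar w_i)$ is bounded away from its achievable maximum, forcing $P(y_t=i)$ below some constant $<1$ there and hence a strictly positive expected negative-log-likelihood. Comparing this constant gap to the vanishing loss of the multi-sense construction gives the claim that the optimum of the training objective must use multiple senses to represent multiple clusters — i.e. \method "will learn to represent as many clusters as possible." The phrase "as possible" I would interpret as: up to the budget $M_{sum}$ and up to clusters that are genuinely disconnected/separated, which matches the dynamic allocation in Algorithm~\ref{alg:distill}, where a word with persistently low $\log P_i$ keeps acquiring senses via \eqn{\ref{eqn:logP}}.

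The main obstacle I anticipate is controlling the normalization denominator in \eqn{\ref{eqn:MoBE2}}: the softmax couples the scores of \emph{all} senses of \emph{all} words, so I cannot reason about word $i$ in isolation. I would handle this by a simultaneous construction — place every sense embedding of every word on the sphere so that the only large kernel value seen by a context vector from cluster $c$ of word $i$ is the one from sense $\langle i,c\rangle$ (possible when the total number of senses is not too large relative to the dimension $d$, using near-orthogonality of many unit vectors in high dimensions), and make the $|\theta|$'s uniformly small so all cross terms are $O(\varepsilon)$. A secondary technical point is that the theorem as stated is qualitative ("as many clusters as possible"), so rather than an exact optimum I would prove an $\varepsilon$-approximation statement: for every $\varepsilon>0$ there exist parameters achieving per-step loss within $\varepsilon$ of the entropy lower bound, which no single-sense model can do whenever $C\ge 2$ and the clusters are $\delta$-separated for a fixed $\delta>0$. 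The remaining steps — differentiability so that gradient training can reach such a configuration, and the boundedness of $\partial_\theta \log\mathcal{K}_\theta$ from \eqn{\ref{eqn:outlier}} ensuring the construction is a stable optimum — are routine given what is already established in Section~\ref{sec:Perturbation}.
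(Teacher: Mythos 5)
Your proposal is a reasonable line of attack, but it is a genuinely different argument from the one in the paper. The paper's proof is a local, gradient-dynamics argument: it conditions on one cluster already being captured by sense $\langle i,1\rangle$ (so that $\exp(h_1\cdot w_i^1)\gg\exp(h_1\cdot w_i^2)$ for $h_1$ in that cluster), writes out $\partial\mathcal{L}/\partial w_i^2$ as a sum of per-cluster ``attraction'' terms, and observes that the already-represented cluster's term is damped by the factor $\exp(h_1\cdot w_i^2)/(\exp(h_1\cdot w_i^1)+\exp(h_1\cdot w_i^2))<\epsilon$, so the net gradient pulls the redundant sense embedding $w_i^2$ toward the unrepresented cluster. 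Yours is a global, representational argument: exhibit a one-sense-per-cluster configuration whose loss approaches the entropy lower bound, show that any single-sense configuration stays bounded away from it when the clusters are $\delta$-separated, and conclude that the optimum must spread senses across clusters. Each buys something the other does not. The paper's argument speaks directly to the word ``learn'' --- it describes where gradient descent pushes a newly allocated sense --- but only under a strong conditioning assumption, only for the plain inner-product kernel, and only for two clusters. Your argument treats the softmax normalization across all words more honestly (via the near-orthogonality construction) and gives a concrete meaning to ``as many as possible'' through the sense budget, but it establishes only that the multi-cluster configuration is near-optimal, not that training reaches it; the step you dismiss as routine --- that gradient training converges to this configuration in a non-convex landscape --- is exactly what the paper's attraction argument is trying to supply, and is not routine. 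Separately, your single-sense lower bound implicitly needs competing words' senses to actually compete on the clusters in question (otherwise a single embedding of large norm with mediocre cosine to every cluster could still dominate the softmax on all of them); you gesture at this but should state it as an assumption.
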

\vspace{-1.5em}
\begin{proof}
Each cluster of word $i$'s context vectors attracts $i$'s \method sense embeddings, in order to draw these embeddings nearer to increase $\mathcal{L}$. However, if a cluster has already been represented by a \method sense, its attractions to embeddings of other senses get weaker. So they will converge to other clusters. Instead of gathering together in a few clusters, senses will try to represent as many clusters of context vectors' distribution as possible.
\end{proof}

\begin{lemma} 
\method has the ability to learn variances of embedding distribution. For distributions with larger variances, \method learns larger $\theta$.
\end{lemma}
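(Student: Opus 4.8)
The plan is to reduce the statement to a one-dimensional comparative-statics problem and then apply the implicit function theorem. First I would isolate a single sense: freeze its embedding $w_i^j$ at the (unit-norm) cluster centroid and freeze every other sense, so that the only relevant part of the objective is the population log-likelihood of the per-sense conditional $p_\theta(h)\propto\exp(\mathcal{K}_\theta(h,w_i^j))$ when $h$ is drawn from the true cluster distribution. The $|h|$ factor in $\mathcal{K}$ is a nuisance that can be absorbed into the base measure, so everything is governed by the scalar $z=\cos(h,w_i^j)$. I would then model the family of ground-truth clusters by a one-parameter dilation $\mathcal{D}_\sigma$ (e.g. an isotropic truncated Gaussian, or a von Mises--Fisher law with concentration $1/\sigma^2$) with the property that the pushforward law of $z$ is stochastically decreasing in $\sigma$ --- this is the precise meaning of ``larger variance'' here, since a more spread-out cluster places less mass near its own centroid.

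Second, I would characterise the optimal $\theta$. Write $\mathcal{L}(\theta;\sigma)=\E_{h\sim\mathcal{D}_\sigma}[\mathcal{K}_\theta(h,w_i^j)]-\log Z(\theta)$ with $Z(\theta)=\int\exp(\mathcal{K}_\theta(h,w_i^j))\,dh$. Using the derivative of the kernel (cf. \eqn{\ref{eqn:outlier}}) together with $\partial_\theta\log Z=\E_{h\sim p_\theta}[\partial_\theta\mathcal{K}_\theta(h,w_i^j)]$, the first-order condition $\partial_\theta\mathcal{L}=0$ becomes a moment-matching identity of the form $\E_{\mathcal{D}_\sigma}[\psi_\theta(z)]=\E_{p_\theta}[\psi_\theta(z)]$, where $\psi_\theta(z)=\partial_\theta\big(a(\theta)(e^{-\theta z}-1)\big)$ is, for each fixed $\theta$, a strictly monotone function of $z$. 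I would then argue that $\mathcal{L}$ is strictly quasi-concave in $\theta$ (equivalently $\partial_\theta^2\mathcal{L}<0$ at every critical point --- the usual exponential-family style variance argument applied to the $\theta$-deformed statistic), so that the maximiser $\theta^\star(\sigma)$ is well defined.

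Third, the comparative statics: by the implicit function theorem applied to $\partial_\theta\mathcal{L}(\theta^\star(\sigma);\sigma)=0$,
\begin{equation}
\frac{d\theta^\star}{d\sigma}=-\frac{\partial_\sigma\partial_\theta\mathcal{L}}{\partial_\theta^2\mathcal{L}},
\end{equation}
and since the denominator is negative it suffices to show $\partial_\sigma\partial_\theta\mathcal{L}>0$, i.e. that spreading the true cluster out raises the marginal likelihood gain from enlarging $\theta$. Because $\partial_\theta\mathcal{L}$ depends on the data only through $\E_{\mathcal{D}_\sigma}[\psi_\theta(z)]$ with $z$ stochastically decreasing in $\sigma$, this reduces to checking the sign of $\psi_\theta$'s monotonicity together with that of $\partial_\theta\psi_\theta$ --- a direct one-variable computation with the explicit kernel $a(\theta)(e^{-\theta z}-1)$, in which the normalisation factor $a(\theta)=\tfrac{-\theta}{2(e^{-\theta}+\theta-1)}$ must be differentiated carefully (this is also where one verifies that the $\theta\to0$ limit is non-degenerate). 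Combining the three steps gives $\theta^\star$ increasing in $\sigma$, which is the claim.

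The hard part will be the global behaviour in $\theta$: unlike the von Mises--Fisher normaliser, $Z(\theta)$ for this kernel has no closed form and depends on the ambient dimension $d$, so proving genuine quasi-concavity (rather than merely a local statement at one critical point) takes care, and without it the conclusion is only that $\theta^\star$ moves in the right direction locally. A secondary difficulty is that ``larger variance'' by itself does not pin down $\E[\cos(h,w_i^j)]$ for a general anisotropic or multi-cluster distribution; the clean monotone conclusion really lives inside a one-parameter dilation family, and beyond it the argument is heuristic --- which matches the informal level at which the companion lemma is stated.
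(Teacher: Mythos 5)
Your proposal is correct in outline and runs on the same engine as the paper's (explicitly heuristic) proof --- larger variance makes $\cos(h_t,w_i)$ stochastically smaller, and the root of the stationarity condition $\partial\mathcal{L}/\partial\theta_i=0$ increases as the cosine statistics decrease --- but the technical route is genuinely different. The paper works directly with the discriminative training objective: it restricts to small positive $\theta$ and a single sense, approximates $a(\theta)\approx-1/\theta$, argues that $\cos(h_t,w_i)$ is small at steps where $\hat y_t\neq i$ so the competitor term in $\partial\mathcal{L}/\partial\theta_i$ can be dropped, and reduces the first-order condition to $\sum_{t,\hat y_t=i}(1-P(y_t=i))\,F_1(\theta_i,\cos_t)=0$ with $F_1(\theta,c)=e^{-\theta c}-1+\theta c\,e^{-\theta c}$; monotonicity of the root then follows from the sign identity $\frac{\partial F_1}{\partial c}\cdot\frac{\partial F_1}{\partial\theta}=\theta^3c^3e^{-2\theta c}>0$ for $c>0$, which is exactly your implicit-function-theorem step carried out by hand on a closed-form one-variable statistic. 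You instead replace the softmax normalizer (a sum over all senses in the vocabulary) by a generative normalizer $Z(\theta)=\int\exp(\mathcal{K}_\theta(h,w))\,dh$ and invoke the exponential-family moment-matching identity; that is a different objective from the one \method actually trains, and it is precisely what creates the intractable, dimension-dependent $Z(\theta)$ and the global quasi-concavity burden you flag --- difficulties the paper avoids (rather than solves) via its ``wrong-target cosines are small'' approximation. What your version buys is a precise meaning for ``larger variance'' (stochastic dominance of $\cos(h,w)$ within a dilation family), which the paper leaves as the informal assertion that more spread-out clusters yield smaller cosines; what the paper's version buys is fidelity to the trained objective and an explicit, directly checkable sign condition with no partition function over $h$. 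Both arguments are avowedly local and heuristic at the same places (single sense, small $\theta$, no global uniqueness of the maximizer), so your stated caveats match the paper's own.
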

\vspace{-1.5em}
\begin{proof}
The optimized $\theta$ is a solution of equation $\frac{\partial \mathcal{L}}{\partial \theta}=0$. We only need to explain that, when the variance of $h$ grows, the solution of the equation gets larger.
\end{proof}

\section{Experiment}
\label{sec:exp}
In this section, we empirically validate the effectiveness of \method.
We will first set up the experiments, and then give the experimental results in Section \ref{sec:text_gen}.

We test \method on several text generation tasks, including: 
\begin{compactitem}
\item \noindent Machine Translation~({MT}) is conducted on IWSLT’16 De$\rightarrow$En, which contains 196k pairs of sentences for training. 
\item Language modeling~({LM}) is included to test the unconditional text generation performance.
Following previous work, we use a 300k, 10k and 30k subset of   One-Billion-Word Corpus for training, validating and testing, respectively.
\item Dialog generation~({Dialog}) is also included. We employ the DailyDialog dataset from \citet{li2017dailydialog} for experiment, by deleting the overlapping of train and test sets in advance. 
\end{compactitem}

Note that these text generation tasks emphasize on different sides.
{MT} is employed to test the ability of semantic transforming across bilingual corpus.
{LM} is included to test whether \method can generally help generate more fluent sentences.
{Dialog} generation even needs some prior knowledge to generate good responses, which is the most challenging task.

For LM, we use Perplexity~(PPL) to test the performance.
For MT and Dialog, we measure the generation quality with BLEU-4 and BLEU-1 scores~\citep{papineni2002bleu}.
Human evaluation is also included for Dialog. 
During human evaluation, 3 volunteers are requested to label Dialog data containing 50 sets of sentences. 
Each set contains the input sentences as well as output responses generated by \method and baseline models. 
Volunteers are asked to score the responses according to their fluency and relevance to the corresponding questions. (See detailed scoring in the appendix.)
After responses are labeled, we calculate the average score of each method. 
Then a t-test is performed to reject the hypothesis that \method is not better than the baseline methods.


\subsection{Implementation Details}
\label{sec:implement-details}
For LM, we use GRU language model~\citep{chung2014empirical} as our testbed. We try different sets of parameters, including RNN layers, hidden sizes and embedding dimensions. 
The model that performs best with traditional softmax is chosen as the baseline.

For MT and Dialog, we implement the attention-based sequence to sequence model~(\textit{Seq2Seq}, \citep{Bahdanau:nmt:iclr}) as well as Transformer~\citep{vaswani2017attention} as our baselines. 
For Seq2Seq, (hidden size, embedding dimension) are set to 
(512, 256) and (1024, 512), respectively.
And For Transformer, (hidden size, embedding dim, dropout, layer num, head num) is set to (288, 507, 0.1, 5, 2) for both MT and Dialog, following \cite{lee2018deterministic}. All models are trained on sentences with up to 80 words. We set the batch size to 128 and the beam size to 5 for decoding. For both German and English, we first tokenize sentences into tokens by Moses tokenizer~\citep{Koehn2007MosesOS}. Then BPE~\citep{BPE} is applied to segment each word into subwords.

Adam~\citep{KingmaB14:adam:iclr} is adopted as our optimization algorithm. We start to decay the learning rate when the loss on validation set stops to decrease. For {LM}, we set the initial learning rate to 1.0, and the decay rate to 0.8. For {MT} and {Dialog}, the initial learning rate is 5e-4 and the decay rate is 0.5.

\subsection{Results of Text Generation}
\label{sec:text_gen}

We list the results of using \method in Table \ref{tab:mt} and \ref{tab:mt_transformer}. Then we give some analysis.

\begin{table}[htb]
    \centering
    \footnotesize
    \caption{Performance of \method on Seq2Seq.}
    \label{tab:mt}
    \begin{tabular}{l|c|c|c|c}
         \hline
         Tasks & Metrics &~~~Seq2Seq~~~~ &~~~Seq2Seq+ MoS~\citep{yang2018breaking}~~~~& ~~~SeqSeq + \method~~~~\\
         \hline
         MT &BLEU-4&25.91&26.45&\textbf{27.28}\\
         \hline
         LM &PPL&103.12&102.72&\textbf{102.17}\\
         \hline
         \multirow{2}{*}{Dialog}&BLEU-1&16.56&13.73&\textbf{17.85}\\
         \cline{2-5}
         &Human Eval.&1.24&1.04&\textbf{1.40}\\
         \hline
    \end{tabular}
\end{table}

\begin{table}[htb]
    \centering
    \footnotesize
    \caption{Performance of \method on Transformer.}
    \label{tab:mt_transformer}
    \begin{tabular}{l|c|c|c|c}
         \hline
         Tasks & Metrics &Transformer &Transformer + MoS~\citep{yang2018breaking}& Transformer + \method\\
         \hline
         MT &~~~~BLEU-4~~~~&29.61&28.54&\textbf{30.90}\\
         \hline
         {Dialog}&~~~~BLEU-1~~~~&10.61&9.81&\textbf{10.90}\\
         \hline
    \end{tabular}
\end{table}

\noindent\textbf{Machine Translation}
\label{sec:mt}
For machine translation, \method achieves higher BLEU-4 scores on Seq2Seq(+1.37) and Transformer(+1.29). However, the performance gain of MoS is not significant, and it is not even as good as vanilla Transformer model.
Cases of MT on Transformer are shown in Table \ref{tab:MT_example}.
\begin{table}[htb]
    \centering
    \footnotesize
    \caption{Examples of MT on IWSLT’16 De$\rightarrow$En}
    \begin{tabular}{l|l}
         \hline
         Source&meine gebildete Mutter aber wurde Lehrerin.\\\hline
         Transformer &my foster mother was a teacher.\\\hline
         ~~~~~+ MoS&and my educated mother was a teacher.\\\hline
         ~~~~~+ \method &but my educated mother became a teacher.\\\hline
         \hline
         Source &\scalebox{0.9}{man erreicht niemals eine Gemeinde mit Ideen, man setzt sich mit den Einheimischen zusammen.}\\\hline
         Transformer&you never achieve a community with ideas; you put together with local people.\\\hline
         ~~~~~+ MoS& you never get a community with ideas, you're putting together with indigenous people.\\\hline
         ~~~~~+ \method &you never get to a community with ideas, and you sit with the local people.
\\\hline
    \end{tabular}
    \label{tab:MT_example}
\end{table}


\noindent\textbf{Language Model}
As expected, \method achieves lower PPL~(102.17) on LM compared with both MoS~(102.72) and traditional softmax~(103.12). Although it introduces more parameters, \method does not lead to overfitting. On the contrary, the increased complexity in \method helps the model to better capture the information of the embedding space.

\noindent\textbf{Dialogue Generation}
\label{sec:dialog}
We also include results of dialog generation.
Unlike tasks where source and target sentences are highly aligned, 
dialog generation may need some prior knowledge for obtaining good responses.
Moreover, the multi-modality of the generated sentences is a serious problem in Dialog. We expect that much expressive structure of \method could help. Since the performance of Transformer is not comparable to Seq2Seq on Dialog generation, we will focus on Seq2Seq in this part. \method achieves a BLEU-1 score of 17.85 on test set, which is remarkable compared with the baselines.
Human evaluations also confirm the effectiveness of using \method in dialog generation. 
After performing a one-tailed hypothesis test, we find that the p-value is lower than 0.05, which means that the obtained improvements on Dialog systems are nontrivial.
We list some of the generated responses of different models in Table \ref{tab:dialog_example}.

\begin{table}[tb]
    \centering
    \caption{Examples of dialog generation on DailyDialog}
    \begin{tabular}{l|l}
         \hline
         Source&what do you mean ?  \\\hline
         Seq2Seq&i mean, what s up with the and iron bars on your windows.  \\\hline
         ~~~~~+ MoS&well, how can i put this? france is a terrible team.\\\hline
         ~~~~~+ \method &well, i mean. we always do the same thing. there s no variety in our lives.\\\hline
         \hline
         source&now , what seems to be the trouble ?  \\\hline
         Seq2Seq &trouble is trouble.  \\\hline
         ~~~~~+ MoS &yeah. and he was.\\\hline
         ~~~~~+ \method &not bad. but i have a bad cold today.\\\hline
         \hline
         source&it can  t be more than fourteen days late for us to accept it .\\\hline
         Seq2Seq&it will just fine.\\\hline
         ~~~~~+ MoS&well, i see. have you been back to work then?\\\hline
         ~~~~~+ \method &maybe you re right. i think we should take it.\\\hline
    \end{tabular}
    \label{tab:dialog_example}
\end{table}
%

\subsection{Ablation Study}

\begin{table}[tb]
    \centering
    \caption{Results of ablation study on MT~(Seq2Seq).}
    \label{tab:mt_ablation}
    \begin{tabular}{l|c}
         \hline
         Models & BLEU-4 \\
        \hline
         Seq2Seq + \method &  27.28\\
         \hspace{2.5cm} w/o kernel &26.79\\
         \hspace{2.5cm} w/ only single sense &26.80\\
         \hspace{2.5cm} w/o dynamic allocation &27.00\\
     \hline
    \end{tabular}
\end{table}

We perform ablation study of three variants of \method on the MT task. \method w/o kernel removes the kernel function from \method , so that distribution variances are no longer explicitly controlled. We find that it loses 0.49 BLEU scores compared with original \method , which indicates that to explicitly express distribution variances of hidden states is important and \method works well in doing so (Table~\ref{tab:mt_ablation}).
\method with single sense replaces the multi-sense model with single-sense one, which also leads to performance decline.
This further confirms our assumption that the distribution of context vectors is multi-modal. 
In such cases, the output layer should also be multi-modal.
In \method w/o dynamic allocation, each word is allocated with a fixed number of senses. Though it still performs better than single sense models, it is slightly worse than full \method model, which shows the necessity of dynamic allocation.

\subsection{Detailed Analysis}
\label{sec:DA}

In this part, we verify that \method learns reasonable sense number $M$ and variance parameter $\theta$ by examples. And we have the following conclusions.

\begin{table}[h]
    \centering
    \vspace{-0.35cm}
    \caption{Randomly selected words with different numbers of senses $M$ after training.}
    \begin{tabular}{|c|c|c|c|c|}
    \hline
         Sense&1 &2&3&4  \\\hline
         &Redwood&particular&open&they\\
         &heal&figure&order&work\\
         word&structural&during&amazing&body\\
         &theoretical&known&sound&power\\
         &rotate&size&base&change\\\hline
    \end{tabular}
    
    \label{tab:vis_multi}
\end{table}
Firstly, KerBS can learn the multisense property. From Table~\ref{tab:vis_multi}, we find that words with a single meaning, including some proper nouns, are allocated with only one sense. But for words with more complex meanings, such as pronouns, more senses are necessary to represent them. (In our experiment, we restrict each word's sense number between 1 and 4, in order to keep the training stable.) In addition, we find that words with 4 senses have several distinct meanings. For instance, ''change'' means transformation as well as small currency.

\begin{figure}[h]
\centering
  \includegraphics[width=0.55\textwidth]{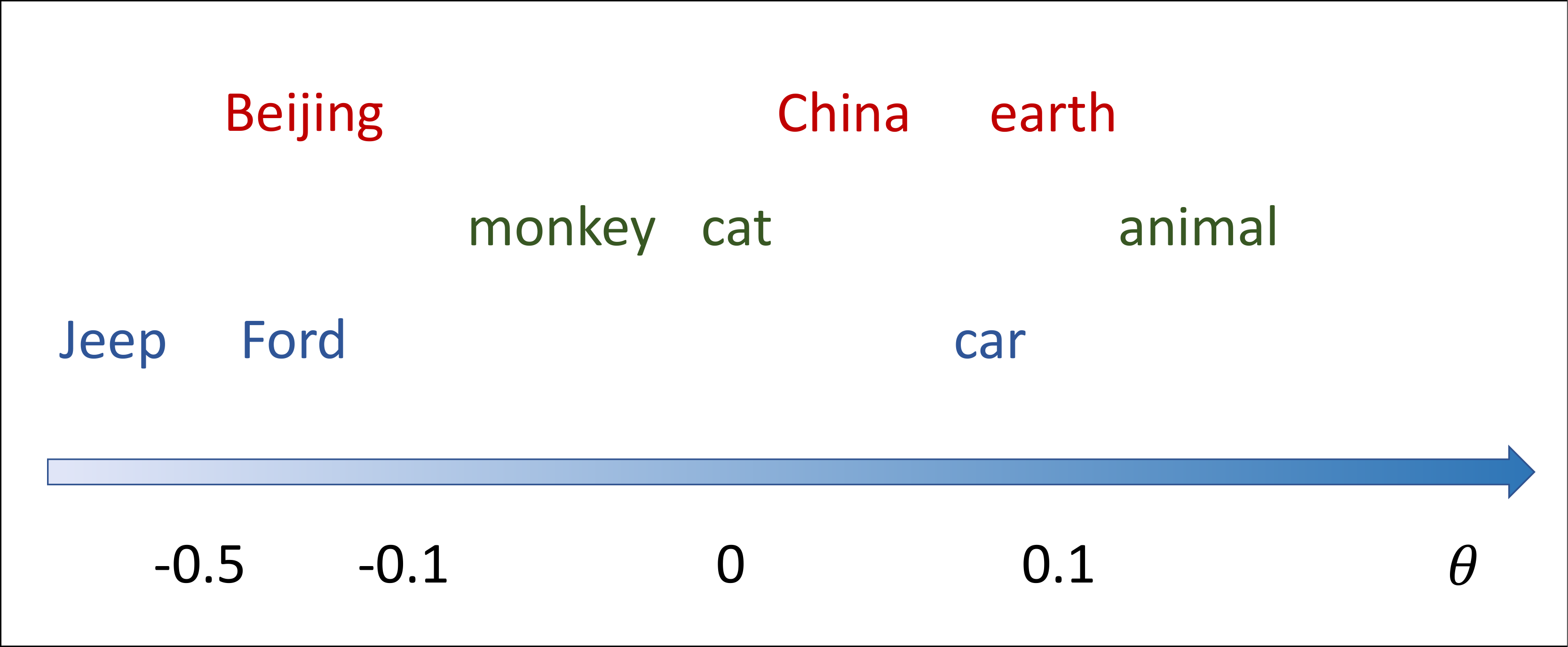}
  \hspace{0.0cm}
  \caption{Words with different $\theta$.}
  \label{fig:theta_exa}
\end{figure}
Secondly, $\theta$ in KerBS is an indicator for words' semantic scopes. In figure~\ref{fig:theta_exa} we compare the $\theta$ of 3 sets of nouns. For each set of them, we find words denoting bigger concepts~(such as car, animal and earth) have larger $\theta$.
\subsection{Time Complexity}
\label{sec:case}


Compared with baselines, the computation cost of incorporating \method into text generation mainly lies with the larger vocabulary for embedding matching, which is only a portion of the whole computation of text generation.
Empirically, when we set the total sense number to about $3$ times the vocabulary size, \method takes twice as long as vanilla softmax for one epoch.

\section{Conclusion}
\label{sec:conclusion}
Text generation requires a proper embedding space for words. 
In this paper, we proposed \method to learn better embeddings for text generation. 
Unlike traditional Softmax, \method includes a Bayesian composition of multi-sense embedding for words and a learnable kernel to capture the similarities between words. 
Incorporating \method into text generation could boost the performance of several text generation tasks, especially the dialog generation task.
Future work includes proposing better kernels for generation and designing a meta learner to dynamically reallocate senses.

\subsubsection*{Acknowledgments}
We would like to thank Xunpeng Huang and Yitan Li for helpful discussion and review of the first version.  
We also wish to thank the anonymous reviewers for their insightful comments.

\bibliography{main}

\begin{thebibliography}{31}
\providecommand{\natexlab}[1]{#1}
\providecommand{\url}[1]{\texttt{#1}}
\expandafter\ifx\csname urlstyle\endcsname\relax
  \providecommand{\doi}[1]{doi: #1}\else
  \providecommand{\doi}{doi: \begingroup \urlstyle{rm}\Url}\fi

\bibitem[Alec~Radford and Sutskever(2018)]{AlecOpenAI}
Tim~Salimans Alec~Radford, Karthik~Narasimhan and Ilya Sutskever.
\newblock Improving language understanding with unsupervised learning.
\newblock In \emph{Technical report, OpenAI}, 2018.

\bibitem[Bahdanau et~al.(2015)Bahdanau, Cho, and Bengio]{Bahdanau:nmt:iclr}
Dzmitry Bahdanau, Kyunghyun Cho, and Yoshua Bengio.
\newblock Neural machine translation by jointly learning to align and
  translate.
\newblock \emph{ICLR 2015}, 2015.

\bibitem[Bengio et~al.(2003)Bengio, Ducharme, Vincent, and
  Jauvin]{bengio2003neural}
Yoshua Bengio, R{\'e}jean Ducharme, Pascal Vincent, and Christian Jauvin.
\newblock A neural probabilistic language model.
\newblock \emph{Journal of machine learning research}, 3\penalty0
  (Feb):\penalty0 1137--1155, 2003.

\bibitem[Chen et~al.(2015)Chen, Xu, He, and Wang]{chen2015improving}
Tao Chen, Ruifeng Xu, Yulan He, and Xuan Wang.
\newblock Improving distributed representation of word sense via wordnet gloss
  composition and context clustering.
\newblock In \emph{Proceedings of the 53rd Annual Meeting of the Association
  for Computational Linguistics and the 7th International Joint Conference on
  Natural Language Processing (Volume 2: Short Papers)}, volume~2, pages
  15--20, 2015.

\bibitem[Chen et~al.(2014)Chen, Liu, and Sun]{chen2014unified}
Xinxiong Chen, Zhiyuan Liu, and Maosong Sun.
\newblock A unified model for word sense representation and disambiguation.
\newblock In \emph{Proceedings of the 2014 Conference on Empirical Methods in
  Natural Language Processing (EMNLP)}, pages 1025--1035, 2014.

\bibitem[Chung et~al.(2014)Chung, Gulcehre, Cho, and
  Bengio]{chung2014empirical}
Junyoung Chung, Caglar Gulcehre, Kyunghyun Cho, and Yoshua Bengio.
\newblock Empirical evaluation of gated recurrent neural networks on sequence
  modeling.
\newblock In \emph{NIPS 2014 Workshop on Deep Learning, December 2014}, 2014.

\bibitem[Devlin et~al.(2019)Devlin, Chang, Lee, and Toutanova]{devlin2018bert}
Jacob Devlin, Ming-Wei Chang, Kenton Lee, and Kristina Toutanova.
\newblock Bert: Pre-training of deep bidirectional transformers for language
  understanding.
\newblock In \emph{Proceedings of the 2019 Conference of the North American
  Chapter of the Association for Computational Linguistics: Human Language
  Technologies, Volume 1 (Long and Short Papers)}, pages 4171--4186, 2019.

\bibitem[Huang et~al.(2012)Huang, Socher, Manning, and Ng]{huang2012improving}
Eric~H Huang, Richard Socher, Christopher~D Manning, and Andrew~Y Ng.
\newblock Improving word representations via global context and multiple word
  prototypes.
\newblock In \emph{Proceedings of the 50th Annual Meeting of the Association
  for Computational Linguistics: Long Papers-Volume 1}, pages 873--882.
  Association for Computational Linguistics, 2012.

\bibitem[Inan et~al.(2017)Inan, Khosravi, and Socher]{inan2016tying}
Hakan Inan, Khashayar Khosravi, and Richard Socher.
\newblock Tying word vectors and word classifiers: A loss framework for
  language modeling.
\newblock \emph{ICLR 2017}, 2017.

\bibitem[Jauhar et~al.(2015)Jauhar, Dyer, and Hovy]{jauhar2015ontologically}
Sujay~Kumar Jauhar, Chris Dyer, and Eduard Hovy.
\newblock Ontologically grounded multi-sense representation learning for
  semantic vector space models.
\newblock In \emph{Proceedings of the 2015 Conference of the North American
  Chapter of the Association for Computational Linguistics: Human Language
  Technologies}, pages 683--693, 2015.

\bibitem[Kingma and Ba(2014)]{KingmaB14:adam:iclr}
Diederik~P. Kingma and Jimmy Ba.
\newblock Adam: {A} method for stochastic optimization.
\newblock \emph{ICLR 2014}, 2014.

\bibitem[Koehn et~al.(2007)Koehn, Hoang, Birch, Callison-Burch, Federico,
  Bertoldi, Cowan, Shen, Moran, Zens, Dyer, Bojar, Constantin, and
  Herbst]{Koehn2007MosesOS}
Philipp Koehn, Hieu Hoang, Alexandra Birch, Chris Callison-Burch, Marcello
  Federico, Nicola Bertoldi, Brooke Cowan, Wade Shen, Christine Moran, Richard
  Zens, Chris Dyer, Ondrej Bojar, Alexandra Constantin, and Evan Herbst.
\newblock Moses: Open source toolkit for statistical machine translation.
\newblock In \emph{ACL}, 2007.

\bibitem[Lee et~al.(2018)Lee, Mansimov, and Cho]{lee2018deterministic}
Jason Lee, Elman Mansimov, and Kyunghyun Cho.
\newblock Deterministic non-autoregressive neural sequence modeling by
  iterative refinement.
\newblock In \emph{Proceedings of the 2018 Conference on Empirical Methods in
  Natural Language Processing}, pages 1173--1182, 2018.

\bibitem[Li et~al.(2017)Li, Su, Shen, Li, Cao, and Niu]{li2017dailydialog}
Yanran Li, Hui Su, Xiaoyu Shen, Wenjie Li, Ziqiang Cao, and Shuzi Niu.
\newblock Dailydialog: A manually labelled multi-turn dialogue dataset.
\newblock In \emph{Proceedings of the Eighth International Joint Conference on
  Natural Language Processing (Volume 1: Long Papers)}, volume~1, pages
  986--995, 2017.

\bibitem[Liu et~al.(2015)Liu, Liu, Chua, and Sun]{liu2015topical}
Yang Liu, Zhiyuan Liu, Tat-Seng Chua, and Maosong Sun.
\newblock Topical word embeddings.
\newblock In \emph{Proceedings of the Twenty-Ninth AAAI Conference on
  Artificial Intelligence}, pages 2418--2424. AAAI Press, 2015.

\bibitem[Mikolov et~al.(2010)Mikolov, Karafi{\'a}t, Burget, {\v{C}}ernock{\`y},
  and Khudanpur]{mikolov2010recurrent}
Tom{\'a}{\v{s}} Mikolov, Martin Karafi{\'a}t, Luk{\'a}{\v{s}} Burget, Jan
  {\v{C}}ernock{\`y}, and Sanjeev Khudanpur.
\newblock Recurrent neural network based language model.
\newblock In \emph{Eleventh Annual Conference of the International Speech
  Communication Association}, 2010.

\bibitem[Mikolov et~al.(2013)Mikolov, Sutskever, Chen, Corrado, and
  Dean]{mikolov2013distributed}
Tomas Mikolov, Ilya Sutskever, Kai Chen, Greg~S Corrado, and Jeff Dean.
\newblock Distributed representations of words and phrases and their
  compositionality.
\newblock In \emph{Advances in neural information processing systems}, pages
  3111--3119, 2013.

\bibitem[Miller(1995)]{miller1995wordnet}
George~A Miller.
\newblock Wordnet: a lexical database for english.
\newblock \emph{Communications of the ACM}, 38\penalty0 (11):\penalty0 39--41,
  1995.

\bibitem[Papineni et~al.(2002)Papineni, Roukos, Ward, and
  Zhu]{papineni2002bleu}
Kishore Papineni, Salim Roukos, Todd Ward, and Wei-Jing Zhu.
\newblock Bleu: a method for automatic evaluation of machine translation.
\newblock In \emph{ACL 2002}, 2002.

\bibitem[Pennington et~al.(2014)Pennington, Socher, and
  Manning]{pennington2014glove}
Jeffrey Pennington, Richard Socher, and Christopher Manning.
\newblock Glove: Global vectors for word representation.
\newblock In \emph{Proceedings of the 2014 conference on empirical methods in
  natural language processing (EMNLP)}, pages 1532--1543, 2014.

\bibitem[Peters et~al.(2018)Peters, Neumann, Iyyer, Gardner, Clark, Lee, and
  Zettlemoyer]{peters2018deep}
Matthew Peters, Mark Neumann, Mohit Iyyer, Matt Gardner, Christopher Clark,
  Kenton Lee, and Luke Zettlemoyer.
\newblock Deep contextualized word representations.
\newblock In \emph{Proceedings of the 2018 Conference of the North American
  Chapter of the Association for Computational Linguistics: Human Language
  Technologies, Volume 1 (Long Papers)}, volume~1, pages 2227--2237, 2018.

\bibitem[Reisinger and Mooney(2010)]{reisinger2010multi}
Joseph Reisinger and Raymond~J Mooney.
\newblock Multi-prototype vector-space models of word meaning.
\newblock In \emph{Human Language Technologies: The 2010 Annual Conference of
  the North American Chapter of the Association for Computational Linguistics},
  pages 109--117. Association for Computational Linguistics, 2010.

\bibitem[Sennrich et~al.(2016)Sennrich, Haddow, and Birch]{BPE}
Rico Sennrich, Barry Haddow, and Alexandra Birch.
\newblock Neural machine translation of rare words with subword units.
\newblock In \emph{{ACL} 2016}, 2016.

\bibitem[Sordoni et~al.(2015)Sordoni, Galley, Auli, Brockett, Ji, Mitchell,
  Nie, Gao, and Dolan]{sordoni2015neural}
Alessandro Sordoni, Michel Galley, Michael Auli, Chris Brockett, Yangfeng Ji,
  Margaret Mitchell, Jian-Yun Nie, Jianfeng Gao, and Bill Dolan.
\newblock A neural network approach to context-sensitive generation of
  conversational responses.
\newblock In \emph{Proceedings of the 2015 Conference of the North American
  Chapter of the Association for Computational Linguistics: Human Language
  Technologies}, pages 196--205, 2015.

\bibitem[Sun et~al.(2018)Sun, Yan, Qiu, and Huang]{sunEtAl2018}
Chi Sun, Hang Yan, Xipeng Qiu, and Xuanjing Huang.
\newblock Gaussian word embedding with a wasserstein distance loss.
\newblock \emph{arXiv preprint arXiv:1808.07016}, 2018.

\bibitem[Sutskever et~al.(2014)Sutskever, Vinyals, and
  Le]{sutskever2014sequence}
Ilya Sutskever, Oriol Vinyals, and Quoc~V Le.
\newblock Sequence to sequence learning with neural networks.
\newblock In \emph{Advances in neural information processing systems}, pages
  3104--3112, 2014.

\bibitem[Tian et~al.(2014)Tian, Dai, Bian, Gao, Zhang, Chen, and
  Liu]{tian2014probabilistic}
Fei Tian, Hanjun Dai, Jiang Bian, Bin Gao, Rui Zhang, Enhong Chen, and Tie-Yan
  Liu.
\newblock A probabilistic model for learning multi-prototype word embeddings.
\newblock In \emph{Proceedings of COLING 2014, the 25th International
  Conference on Computational Linguistics: Technical Papers}, pages 151--160,
  2014.

\bibitem[Vaswani et~al.(2017)Vaswani, Shazeer, Parmar, Uszkoreit, Jones, Gomez,
  Kaiser, and Polosukhin]{vaswani2017attention}
Ashish Vaswani, Noam Shazeer, Niki Parmar, Jakob Uszkoreit, Llion Jones,
  Aidan~N Gomez, {\L}ukasz Kaiser, and Illia Polosukhin.
\newblock Attention is all you need.
\newblock In \emph{Advances in Neural Information Processing Systems}, pages
  5998--6008, 2017.

\bibitem[Vilnis and McCallum(2015)]{vilnis2015}
Luke Vilnis and Andrew McCallum.
\newblock Word representation via gaussian embedding.
\newblock In \emph{Proceedings of ICLR 2015}, 2015.

\bibitem[Wu and Giles(2015)]{wu2015sense}
Zhaohui Wu and C~Lee Giles.
\newblock Sense-aaware semantic analysis: A multi-prototype word representation
  model using wikipedia.
\newblock In \emph{AAAI}, pages 2188--2194, 2015.

\bibitem[Yang et~al.(2018)Yang, Dai, Salakhutdinov, and
  Cohen]{yang2018breaking}
Zhilin Yang, Zihang Dai, Ruslan Salakhutdinov, and William~W. Cohen.
\newblock Breaking the softmax bottleneck: {A} high-rank {RNN} language model.
\newblock In \emph{ICLR}, 2018.

\end{thebibliography}
\bibliographystyle{plainnat}

\begin{appendices}
\section{Proofs}
\label{sec:proof}
\begin{lemma} 
\method has the ability to learn the multi-sense property. If the real distribution of context vectors is composed of several disconnected parts, \method components will learn to represent as many as these parts.
\end{lemma}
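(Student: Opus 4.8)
The plan is to recast the claim as a statement about the stationary points of the log-likelihood $\mathcal{L}$: I want to show that any configuration of sense embeddings in which some reachable cluster of a word's context vectors is left ``uncovered'' (no sense embedding of that word is close, in cosine, to that cluster) while two or more senses sit together on another cluster cannot be a local maximizer of $\mathcal{L}$. Since training is gradient ascent on $\mathcal{L}$ (Algorithm~1), a converged solution for word $i$ must then represent $\min(M_i, C_i)$ of the $C_i$ disconnected clusters of word $i$'s context vectors, which is exactly ``as many as these parts''.

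First I would fix a word $i$ and isolate the contribution of its occurrences to $\mathcal{L}$ using \eqn{\ref{eqn:MoBE1}}--\eqn{\ref{eqn:MoBE2}}: for a context vector $h$ drawn from some cluster, $\log P(y_t=i\mid h) = \log\!\big(\sum_{j} \exp(\mathcal{K}_{\theta_i^j}(h, w_i^j))\big) - \log Z(h)$, where $Z(h)$ is the global partition function over all senses of all words. Differentiating with respect to a single sense embedding $w_i^{j_1}$ gives a sum over data points of a nonnegative responsibility weight times $\nabla_{w_i^{j_1}}\mathcal{K}_{\theta_i^{j_1}}(h, w_i^{j_1})$, which by \eqn{\ref{eqn:kernel}} points toward or away from $h$ along the sphere according to the sign of the kernel's cosine derivative. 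The central computation is then the directional derivative of $\mathcal{L}$ along the move that slides $w_i^{j_1}$ from a cluster $B$ it shares with another sense $w_i^{j_2}$ toward the centroid of an uncovered cluster $A$.

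Next I would estimate the two competing effects of that move. On points of $A$: before the move $P(y_t=i\mid h)$ is small (no sense of $i$ is near them), so by concavity of $\log$ the first-order increase $\Delta\log P$ is large and strictly positive. On points of $B$: the marginal $P(y_t=i\mid h)$ only loses the $\exp(\mathcal{K}_{\theta_i^{j_1}})$ term but keeps $\exp(\mathcal{K}_{\theta_i^{j_2}})$, so $\Delta\log P$ stays bounded below by a finite quantity --- abandoning $B$ is cheap precisely because $w_i^{j_2}$ still holds it. On points of every other cluster and every other word the perturbation changes $\log Z$ only by $O(\varepsilon)$, where $\varepsilon$ bounds cross-cluster kernel values (this is where the ``disconnected'' hypothesis enters: clusters are far apart in cosine and $\mathcal{K}_\theta$ decays with cosine, so a sense parked on $B$ contributes negligibly at $A$ and vice versa). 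Weighting by the cluster masses, the $A$-gain dominates once the clusters are sufficiently separated, so $\mathcal{L}$ strictly increases --- contradicting local optimality. Hence at a local maximum no coverable cluster is left uncovered while a redundant sense exists.

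The main obstacle, I expect, is making ``disconnected clusters'' quantitative enough to tame the coupling through the shared softmax denominator $Z(h)$: I will need a margin/separation assumption (a minimum cosine gap between clusters and a bounded range for the learned $\theta_i^j$) under which all cross-cluster kernel contributions are uniformly small, and I will need to guarantee that a genuinely redundant sense (two senses on one cluster) exists so that the ``abandon $B$'' step costs almost nothing --- and this is exactly what the dynamic sense allocation in Algorithm~1 supplies, since it reallocates the least-used sense and thereby manufactures such redundancy whenever a word is under-covered. I would also confirm that the responsibilities cannot conspire to annihilate the useful gradient, which follows from $\mathcal{K}_\theta$ being bounded and continuous (the estimate already recorded around \eqn{\ref{eqn:outlier}}). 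With the separation setup in place, the remaining pieces --- the differentiation, the $\log$-concavity lower bound at $A$, and the Taylor estimates on $\log Z$ --- are routine.
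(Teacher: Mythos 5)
Your proposal is correct in its core mechanism and that mechanism is the same one the paper uses: a cluster that is already well represented by one sense exerts only a weak pull on a second, redundant sense, because the softmax responsibility of that redundant sense on the covered cluster is suppressed (the paper's factor $\exp(h_1\cdot w_i^2)/(\exp(h_1\cdot w_i^1)+\exp(h_1\cdot w_i^2))<\epsilon$ is exactly your ``abandoning $B$ is cheap because $w_i^{j_2}$ still holds it''), while an uncovered cluster exerts a strong pull. The packaging differs, though. The paper runs a direct gradient-direction argument in a deliberately stripped-down setting: inner-product kernel only, one word with exactly two disconnected parts and two senses, with part~1 \emph{assumed} already captured by sense $\langle i,1\rangle$; it then splits $\partial\mathcal{L}/\partial w_i^2$ into the two parts' contributions and concludes $w_i^2$ drifts toward part~2. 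You instead phrase it as a non-local-optimality statement (no configuration leaving a cluster uncovered while a redundant sense exists can be a maximizer), keep the full kernel $\mathcal{K}_\theta$ and the global partition function, and explicitly surface the hypotheses the paper leaves implicit: a quantitative separation margin so that cross-cluster kernel values are uniformly $O(\varepsilon)$, and the existence of a genuinely redundant sense, which you correctly attribute to the dynamic allocation step of Algorithm~1. Your route buys a cleaner conclusion (a property of all converged solutions rather than of one gradient step at one assumed configuration) at the cost of needing those extra assumptions to be stated and used; the paper's route is shorter but proves strictly less, and in particular never addresses why the assumed ``part 1 already covered'' state is reached in the first place --- a gap your stationary-point framing also does not close, but at least makes visible. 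Neither argument is fully rigorous about the coupling through the shared denominator; you flag this, the paper silently absorbs it into the constants $R_1,R_2$.
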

\begin{proof}
We only prove the simplest situation under traditional inner product kernel. We assume that the real context vectors of the $i$-th word are composed of two disconnected parts and it is also allocated with two \method senses. We also assume that part 1 has already been represented by sense $\langle i,1\rangle$, i.e., $P(s=\langle i,1\rangle|y=i)\rightarrow 1$ for $h_1$ in part 1. 
Then for the second newly allocated sense $\langle i,2\rangle$, we find 
\begin{align}
    \frac{\partial \mathcal{L}}{\partial w_i^2} 
    &=\sum_{h_1} \frac{\partial log(Softmax(h_1\cdot w_i^2))}{\partial w_i^2}+\sum_{h_2} \frac{\partial log(Softmax(h_2\cdot w_i^2))}{\partial w_i^2}\\\label{eqn:1}
    &=\sum_{h_1} \frac{R_1exp(h_1\cdot w_i^2)h_1}{(exp(h_1\cdot w_i^1)+exp(h_1\cdot w_i^2))(exp(h_1\cdot w_i^1)+exp(h_1\cdot w_i^2)+R_1)}\\\label{eqn:2}
    &+\sum_{h_2} \frac{R_2exp(h_2\cdot w_i^2)h_2}{(exp(h_2\cdot w_i^1)+exp(h_2\cdot w_i^2))(exp(h_2\cdot w_i^1)+exp(h_2\cdot w_i^2)+R_2)},
\end{align}
where $h_1$ and $h_2$ are context vectors in part 1 and 2, respectively. $R_i=\sum exp(h_i\cdot w_j^k)$ for all senses except $\langle i,1\rangle$ and $\langle i,2\rangle$. As part 1 has already be well represented by sense $\langle i,1\rangle$, $exp(h_1\cdot w_i^1)$ should be much larger than $exp(h_1\cdot w_i^2)$.

Then
\begin{equation}
    \frac{exp(h_1\cdot w_i^2)}{exp(h_1\cdot w_i^1)+exp(h_1\cdot w_i^2)}< \epsilon.
\end{equation}

As a result part 1's attraction~(line~\ref{eqn:1}) to $w_i^2$ is much smaller than part 2~(line~\ref{eqn:2}), and $w_i^2$ will move towards part 2.

\end{proof}

\begin{lemma} 
\method has the ability to learn model variances. For distributions with larger variances, \method learns larger $\theta$.
\end{lemma}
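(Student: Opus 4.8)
The plan is to characterize the optimal $\theta_i^j$ as a root of the stationarity equation $\partial\mathcal{L}/\partial\theta_i^j=0$ and then to show that this root is monotone in the spread of the context vectors assigned to sense $\langle i,j\rangle$. As in the proof of Lemma~1, I would work in the simplified regime where sense $\langle i,j\rangle$ already dominates the prediction of word $i$, so $P(s_t=\langle i,j\rangle\mid y_t=i)\approx 1$ on the context vectors labelled $i$, the softmax normalizer $Z_t$ and the norms $|h_t|$, $|w_i^j|$ contribute only lower-order terms, and the terms with $\hat y_t\neq i$ are negligible because $P(s_t=\langle i,j\rangle)$ is tiny there. Differentiating the log-softmax in \eqn{\ref{eqn:MoBE2}} with respect to $\theta_i^j$ and collecting terms gives
\[
  \frac{\partial\mathcal{L}}{\partial\theta_i^j}
   =\sum_{t:\hat y_t=i}\frac{P_{ij}^{(t)}\bigl(1-P^{(t)}(y_t=i)\bigr)}{P^{(t)}(y_t=i)}\,\partial_\theta\mathcal{K}_{\theta}(h_t,w_i^j)
   \;-\;\sum_{t:\hat y_t\neq i}P_{ij}^{(t)}\,\partial_\theta\mathcal{K}_{\theta}(h_t,w_i^j),
\]
and, writing $c_t=\cos(h_t,w_i^j)$ and $g(\theta,c)=a(\theta)(e^{-\theta c}-1)$ with $a(\theta)=\frac{-\theta}{2(e^{-\theta}+\theta-1)}$ so that $\mathcal{K}_\theta(h_t,w_i^j)=|h_t|\,|w_i^j|\,g(\theta,c_t)$, the stationarity condition collapses to $\mathbb{E}_{\mu_i}\!\left[\partial_\theta g(\theta,c)\right]=0$, where $\mu_i$ is the positively weighted empirical law of the angular coordinate $c=\cos(h,w_i^j)$ over the context vectors of word $i$ (its weights being, to leading order, independent of $\theta$).

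The core of the argument is then two monotonicity facts about $G(\theta,\mu):=\mathbb{E}_{\mu}[\partial_\theta g(\theta,c)]$. First, for every fixed $\theta$ the map $c\mapsto\partial_\theta g(\theta,c)=a'(\theta)(e^{-\theta c}-1)-a(\theta)\,c\,e^{-\theta c}$ is strictly monotone: it is a smooth combination of $e^{-\theta c}$ and $c\,e^{-\theta c}$ whose derivative in $c$ keeps a single sign once the signs of $a(\theta)$ and $a'(\theta)$ are accounted for. Hence $G(\theta,\cdot)$ is monotone under increasing dispersion (a mean-preserving spread) of $\mu$. Second, $\partial_\theta G(\theta,\mu)$ has a definite sign along the working range of $\theta$; this is exactly local concavity of $\mathcal{L}$ in $\theta_i^j$, and it makes the root $\theta^\star(\mu)$ of $G(\theta,\mu)=0$ unique and well defined. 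Granting these, the conclusion is an intermediate-value argument: if the law of $\cos(h,w_i^j)$ under a higher-variance cluster $\mu^{(2)}$ is more dispersed than under $\mu^{(1)}$ — spreading context vectors out makes small angles to $w_i^j$ rarer and large angles more common — then $G(\theta,\mu^{(2)})$ lies on one fixed side of $G(\theta,\mu^{(1)})$ for all $\theta$, and by the sign of $\partial_\theta G$ the roots are ordered as $\theta^\star(\mu^{(2)})>\theta^\star(\mu^{(1)})$, i.e. larger variance yields larger $\theta$.

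I expect the main obstacle to be making ``larger variance'' precise and matching it to the one-dimensional quantity that actually enters $g$. The natural reduction is to phrase the hypothesis directly as dispersion of the pushforward of $\cos(h,w_i^j)$, since only this coordinate appears in the kernel; one then needs a short auxiliary lemma — possibly under a mild rotational-symmetry assumption on the cluster about the direction of $w_i^j$ — saying that inflating the Euclidean variance of $h$ disperses $\cos(h,w_i^j)$ in the stochastic-ordering sense. The secondary technical nuisance is the sign bookkeeping for $a(\theta)$, $a'(\theta)$ and for $\partial_\theta g(\theta,\cdot)$ across the sign change of $\theta$ at $0$, where $a(\theta)\to-1/\theta$ and $g(\theta,c)\to c$; here I would lean on the limiting expansions already recorded for \eqn{\ref{eqn:kernel}} and on the gradient identity \eqn{\ref{eqn:outlier}} to confirm that $\partial_\theta g(\theta,\cdot)$ has a single sign change in $c$ throughout, so that the two monotonicities, and hence the ordering of the roots, hold uniformly. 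Once those signs are settled, the remaining steps are routine.
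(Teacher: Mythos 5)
Your plan follows essentially the same route as the paper's own (heuristic) proof: set the stationarity condition $\partial\mathcal{L}/\partial\theta_i=0$, discard the $\hat y_t\neq i$ terms, approximate $a(\theta)\approx-1/\theta$ so everything reduces to a function of $\cos(h_t,w_i)$, and conclude by a sign/implicit-function argument that the root $\theta$ moves up as the cosines spread out and shrink. Your version is somewhat more explicit about the two gaps the paper glosses over (uniqueness of the root via concavity in $\theta$, and the reduction from Euclidean variance of $h$ to dispersion of the cosine), but the underlying argument is the same.
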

\begin{proof}
We will only give a heuristic proof for the situation where $\theta$ is a small positive number. The proof is also done under single-sense condition. If $\theta$ is in other intervals, the proof will be more complex, but the ideas are the same. 

From the definition of $\mathcal{L}$, 
\begin{equation}
\mathcal{L}=\sum_t \log(P(y_t=\hat{y}_t; \theta)),
\end{equation}
where $\hat{y}_t$ is the the expected output for $y_t$, and we temporarily hide other parameters.

We can derive the partial derivative of $\mathcal{L}$ with respect to $\theta_i$:
\begin{equation}
\frac{\partial \mathcal{L}}{\partial \theta_i}=\sum_{t, \hat{y}_t=i} (1-P(y_t=i))\frac{\partial \mathcal{K}_{\theta}(h_t, w_i)}{\partial \theta_i} - \sum_{t, \hat{y}_t\neq i} P(y_t=i)\frac{\partial \mathcal{K}_{\theta}(h_t, w_i)}{\partial \theta_i}.
\label{eqn:2.1}
\end{equation}

When $\theta$ is small, we can approximate $a$ by the following equation:
\begin{equation}
a=\frac{-\theta}{2(\exp(-\theta)+\theta-1))} \approx \frac{-\theta}{2(1-\theta+\frac{\theta^2}{2}+\theta-1)))}
=-\frac{1}{\theta}.
\end{equation}

Approximately,
\begin{equation}
\mathcal{K}_{\theta}(h_t, w_i)\propto -\frac{1}{\theta_i}(\exp(-\theta_i cos_t)-1),
\end{equation}
\begin{equation}
\frac{\partial \mathcal{K}_{\theta}(h_t, w_i)}{\partial \theta_i}\propto \frac{1}{\theta_i^2}(\exp(-\theta_i cos_t)-1)-\frac{1}{\theta_i}(-cos_t\exp(-\theta_i cos_t)),
\end{equation}
where $cos(h_t, w_i)$ is abbreviated as $cos_t$.

Because $cos(h_t, w_i)$ is usually small for $\hat{y}_t\neq i$
we can ignore the second part of \eqn{\ref{eqn:2.1}}.
So the optimal value for $\theta$ is approximately a solution to 
\eqn{\ref{eqn:2.2}}.
\begin{equation}
\sum_{t, \hat{y}_t=i} (1-P(y_t=i))\frac{\partial \mathcal{K}_{\theta}(h_t, w_i)}{\partial \theta_i}=0.\label{eqn:2.2}
\end{equation}

Then,
\begin{equation}
F=\sum_{t, \hat{y}_t=i} (1-P(y_t=i))(\underbrace{ exp(-\theta_i cos_t)-1+\theta_i cos_t exp(-\theta_i cos_t)}_{F_1})=0,
\end{equation}

Hence, when $cos_t$ gets smaller, $\theta_i$ tends to increase, since $\frac{\partial F_1}{\partial cos_t}\frac{\partial F_1}{\partial \theta_i}>0$ when $cos_t>0$ and $cos_t$ is usually positive when $\hat{y}_t=i$. So when distribution variance increases, $cos_t$ tends to decrease, because context vectors are farther from the mean vector. As a result, $\theta_i$ will increase.
\end{proof}

\section{Experiment Details}
\label{Experiment details}
\paragraph{Scoring Standard for Human Evaluation}
The volunteers are asked to score responses generated by all models according to the following standard:
\begin{itemize}
    \item Score 0 : response which is neither fluent nor relative to the input question.
    \item Score 1 : response which is either fluent or relative to the input question, but not both.
    \item Score 2 : response which is both fluent and relative to the input question.
\end{itemize}

\end{appendices}
\end{document}